\definecolor{codegreen}{rgb}{0,0.6,0}
\definecolor{codegray}{rgb}{0.5,0.5,0.5}
\definecolor{codepurple}{rgb}{0.58,0,0.82}
\definecolor{backcolour}{rgb}{0.95,0.95,0.92}
\lstdefinestyle{mystyle}{
  backgroundcolor=\color{backcolour}, commentstyle=\color{codegreen},
  keywordstyle=\color{magenta},
  numberstyle=\tiny\color{codegray},
  stringstyle=\color{codepurple},
  basicstyle=\ttfamily\footnotesize,
  breakatwhitespace=false,         
  breaklines=true,                 
  captionpos=b,                    
  keepspaces=true,                 
  numbers=left,                    
  numbersep=5pt,                  
  showspaces=false,                
  showstringspaces=false,
  showtabs=false,                  
  tabsize=2
}
\newcommand{\inner}[1]{\left\langle#1\right\rangle}
\def\R{\mathbb{R}}
\newcommand{\norm}[1]{\left\|#1\right\|}
\def\Nc{\mathcal{N}}
\def\Exp{\mathbb{E}}
\def\min{\mathop{\rm min}\nolimits}
\def\max{\mathop{\rm max}\nolimits}
\theoremstyle{plain}
\newtheorem{theorem}{Theorem}[section]
\newtheorem{proposition}[theorem]{Proposition}
\newtheorem{corollary}[theorem]{Corollary}
\theoremstyle{definition}
\newtheorem{definition}[theorem]{Definition}
\newtheorem{example}[theorem]{Example}
\theoremstyle{remark}
\newtheorem{remark}[theorem]{Remark}
\newcommand{\mh}[1]{{\color{black}{#1}}}
\newcommand{\vv}[1]{{\color{black}{#1}}}
\newcommand{\rb}[1]{{\color{black}{#1}}}
\newcommand{\gr}[1]{{\color{black}{#1}}}
\newcommand{\rred}[1]{{\color{red}{#1}}}
\newcommand{\bblue}[1]{{\color{blue}{#1}}}
\title{Sound Randomized Smoothing in Floating-Point Arithmetic}
\author{Václav Voráček, Matthias Hein \\  T\" ubingen AI Center, University of T\" ubingen    }
\begin{document}

\maketitle

\begin{abstract}
Randomized smoothing is sound when using infinite precision. However, we show that randomized smoothing is no longer sound for limited floating-point precision. We present a simple example where randomized smoothing certifies a radius of $1.26$ around a point, even though there is an adversarial example in the distance $0.8$ and show how this can be abused to give false certificates for CIFAR10. We discuss the implicit assumptions of randomized smoothing and show that they do not apply to generic image classification models whose smoothed versions are commonly  certified.
  In order to overcome this problem, we propose a sound approach to randomized smoothing when using floating-point precision with essentially
  equal speed for \mh{quantized input}. It yields sound certificates \mh{for image classifiers} which for the ones tested so far are very similar to the unsound practice of randomized smoothing.
  Our only assumption is that we have  access to a fair coin.
\end{abstract}

\section{Introduction}
Shortly after the advent of deep learning, it was observed in~\citet{SzeEtAl2014} that there exist adversarial examples, i.e., small imperceptible modifications of the input which change the decision of the classifier. This property is of major concern in application areas where safety and security are critical such as medical diagnosis or in autonomous driving.
To overcome this issue, a lot of different defenses have appeared over the years, but new attacks were proposed and could break these defenses, see, e.g.,~\citep{AthCarWag2018, croce2020reliable, tramer2020adaptive, carlini2019evaluating}. The only empirical (i.e., without guarantees) method which seems to work is adversarial training~\citep{GooShlSze2015,MadEtAl2018} but also there, a lot of defenses turned out to be substantially weaker than originally thought \citep{croce2020reliable}. 

Hence, there has been a focus on certified robustness. Here, the aim is to produce certificates assuring  no adversarial example exists in a small neighborhood of the original image. \mh{For the  neighborhood, typically called threat model, one often uses  $\ell_p$- balls centered at the original image.} However, there  also exist other choices, such as Wasserstein balls~\citep{wong2019wasserstein, levine2020wasserstein} or balls induced by perceptual metrics~\citep{laidlaw2021perceptual, voracek2022provably}.
The common certification techniques include (1) Bounding the Lipschitz constant of the network, see~\citet{HeiAnd2017,li2019preventing, trockman2021orthogonalizing, leino21gloro,singla2022improved} for the $\ell_2$ threat model and~\citet{zhang2022boosting} for $\ell_\infty$. (2) Overapproximating the threat model by its convex relaxation (admittedly, bounding Lipschitz constant can also be  interpreted this way), possibly combined with mixed-integer linear programs or SMT; see, e.g.,~\citet{katz2017reluplex, gowal2018effectiveness, wong2018scaling, Balunovic2020Adversarial}. (3) Randomized smoothing~\citep{lecuyer2018certified,cohen2019certified,salman2019provably}, which is hitherto the only method scaling to ImageNet.
 Note that the concept of randomized smoothing may also be interpreted as a special case of (1), see~\citet{salman2019provably}.

All of these certificates expect that calculations can be done with unlimited precision and do not take into account how finite precision arithmetic affects the certificates.
For Lipschitz networks (1), the round-off error is of the order of the lowest significant bits of mantissa, which we can  estimate to be in the orders of $\sim 10^{-8}$ for single-precision floating-point numbers. Thus, we should assume that the adversary can also inject $\ell_\infty$-perturbation bounded by $\sim 10^{-8}$ in every layer. However, since the networks have small Lipschitz constants by \mh{construction}, those errors will not be significantly magnified. Although we cannot universally quantify the numerical errors of Lipschitz networks, they will likely be very small and in particular, can be efficiently traced during the forward pass 
so that the certificates \mh{can be made} sound. For the verification methods from category (2), previous works have shown that numerical errors may lead to false certificates for methods based on SMT or mixed-integer linear programming~\citep{jia2021exploiting, zombori2021fooling}. However, it is possible (and often done in practice) to adapt the verification procedure to be sound w.r.t. floating-point inaccuracies~\citep{singh2019abstract}; thus, the problem is not fundamental, and these verification techniques can be made sound.
For randomized smoothing certificates (3), ~\citet{jin2022getting} perform floating-point attacks on certifiably robust networks and indicate the  existence of false certificates; see Appendix~\ref{app:getting} for a discussion.
The recent work of~\citet{lin2021integer} focuses on randomized smoothing when using only integer arithmetic in neural networks for embedded devices, so they will, by definition, not have problems with floating-point errors. On the other hand, it does not cover some modern architectures, such as transformers. Furthermore, the way the certificates are computed is derived from the continuous normal distribution; thus, the certificates are approximate, see Appendix~\ref{app:intsmooth}. \gr{Another direction is so-called derandomized smoothing - methods that remind randomized smoothing but are deterministic. See, e.g.,~\citet{levine2021improved}}.

In this paper, we make the following contributions\footnote{Code is available at \url{https://github.com/vvoracek/Sound-Randomized-Smoothing}}:
\begin{enumerate}
    \item We perform a novel analysis of numerical errors in randomized smoothing approaches when using floating-point \mh{arithmetic} and identify qualitatively new problems.
    \item Building on the observations, we present a simple approach for developing  classifiers whose smoothed version will provide fundamentally wrong certificates for chosen points \mh{and discuss how this could be exploited in practice}.
    \item We propose a sound randomized smoothing procedure for floating-point \mh{arithmetic} \mh{with negligible computational overhead for image classification compared to the  unsound practice.} 
\end{enumerate}

\mh{While we could not find substantial differences of our sound certificates compared to the unsound practice for our tested classifiers, \gr{a lack of a counterexample is not a proof of the correctness}. The past has shown that such gaps will be exploited by malicious actors in the future. It is to be expected that  certificates of adversarial robustness are required for classifiers used in safety-critical systems (see European AI act \citet{EU2021}) and thus will be controlled by regulatory bodies. A malicious company could use then the problems of randomized smoothing in floating-point arithmetic to provide fake certificates on a known/leaked test set. Since our sound randomized smoothing procedure for floating-point arithmetic comes at essentially no additional cost for quantized input e.g., images, we believe that using our sound procedure should always be used for such domains.}

\paragraph{Manuscript organization:} We start with the definition of randomized smoothing in Section~\ref{sec:smooth}, then we continue with the introduction of floating-point arithmetic  following the IEEE standard 754~\citep{ieee} in Section~\ref{sec:floats}. In Section~\ref{sec:false}, we exploit the properties of floating-point arithmetic and present a simple classifier producing wrong certificates, and we follow with the identification of the implicit assumptions of randomized smoothing. In Section~\ref{sec:sound} we conclude the main result by proposing a method of sound randomized smoothing in floating-point arithmetic and provide an experimental comparison of the old unsound and the new sound certificates.





\section{Randomized smoothing}
\label{sec:smooth}
Throughout the paper, we consider for clarity the problem of binary classification, but every phenomenon we discuss can be easily transferred to the multiclass setting. \mh{We note that} \rb{the proposed algorithmic fix, see Appendix~\ref{app:algs}, as well as the experiments in~\ref{app:exps}, are done for the multiclass setting}. 

\mh{We first introduce randomized smoothing and} define 
certificates with respect to a norm ball.  
\begin{definition}
A classifier $F:\mathbb{R}^d \rightarrow \{0,1\}$ is said to be certifiably robust at point $x \in \mathbb{R}^d$ with radius $r$, w.r.t. norm $\norm{\cdot}$ if the correct label at $x$ is $y \in \{0,1\}$, and $\norm{x-x'} \leq r \implies F(x') = y$.
\end{definition}
One way to get such a certificate is randomized smoothing \citep{lecuyer2018certified, cohen2019certified, salman2019provably} 
which we introduce following.
We are given a \emph{base classifier} $F: \mathbb{R}^d \rightarrow \{0,1\}$. Its smoothed version is $\hat{f}(x) = \mathbb{E}_{\varepsilon \sim \mathcal{N}(0, \sigma^2I_d)}F(x+\varepsilon)$, and the resulting hard classifier is $\hat{F}(x) = \llbracket \hat{f}(x) > 0.5 \rrbracket$, \mh{where the Iverson bracket $\llbracket \text{statement} \rrbracket$  evaluates to $1$ if and only if the statement inside holds true.}  Using the Neyman-Pearson lemma the following result has been shown:
\begin{theorem}[\citep{cohen2019certified}]
\rb{Let $F$ be a deterministic or random classifier} \mh{and let $\Phi^{-1}$ be the inverse Gaussian CDF.} If
\[ \mathbb{P}_{\varepsilon \mh{\sim} \mathcal{N}(0,\sigma^2I)}(F(x+\varepsilon)=c_A)\geq p_A.\]
\mh{for some }$p_A \in (\frac{1}{2},1]$, 
then $\hat{F}(x+\delta)=c_A$ for all $\delta \in \R^d$ with $\norm{\delta}_2 < \sigma \, \Phi^{-1}(p_A)$.  
\end{theorem}
We call in the following $r(x)=\sigma\,\Phi^{-1}(\hat{f}(x))$ the certified $\ell_2$-radius of $\hat{F}$ at $x$.


\rb{We note we require that the output of the base classifier $F$ to be independent of previous inputs and outputs. It is easy to construct an $F$ violating this assumption and producing false certificates, e.g., take $F$ that returns $0$ in the first $10^6$ calls and $1$ afterwards.} For the majority of classifiers, it is intractable to evaluate $\hat{f}(x)$ exactly; therefore, random sampling is used to estimate it \mh{and thus only a probabilistic certificate is possible where the probability that the certificate holds can be made arbitrarily close to one if one uses more samples or weakens the certificate.} Following the literature, we use $100~000$ samples to estimate $\hat{f}(x)$ and then lower bound this by $p$ for certifying class $1$ (resp. upper bound it for class $0$) \mh{so that the} failure probability, that is when $p > \hat{f}(x)$ (resp. $p< \hat{f}(x)$), is at most $0.001$. The value of $p$ 
can be computed using 
tail bounds or classical Clopper-Pearson confidence intervals for the binomial distribution. The actual certification procedure is described in Algorithm~\ref{alg:cohen}. However, to keep the example \mh{below} in Listing~\ref{lst1} as simple as possible, we computed $p$ using a simple Hoeffding bound\rb{ which we derive in Appendix~\ref{app:hoef}}. Although it produces a weaker certificate, it is still sufficient for the demonstration. 


\section{Computer representation of floating-point numbers}\label{sec:floats}

In this section we briefly introduce the floating-point representation and arithmetic according to standard IEEE-754 \citep{ieee}. A detailed version with examples and treatment of other precisions can be found in Appendix~\ref{app:floating} where we present examples in a toy, $8-$bit, arithmetic. Here, we introduce only single-precision floating-point numbers.

Single-precision floating-point numbers are represented in memory as sequences of bits $x_1x_2\dots x_{32}$. The first bit is a sign bit, the next $8$ bits determine the exponent, and the last $23$ numbers determine the mansissa. The conversion in normalized form is as follows: 

\[ 
 (-1)^{x_1} \cdot 2^{\left(\sum_{i=2}^{9}x_i\cdot2^{9-i}\right)-127}\cdot \left(1+\sum_{i=10}^{32} x_i\cdot2^{9-i}\right).
\] 

We will write the floating-point operations in circles; e.g., $\oplus, \ominus$ instead of $+,-$ to distinguish them from the mathematical ones which do not suffer from rounding errors.

The addition (or analogically subtraction) of two floating-point numbers is performed in three steps. First, the number with the lower exponent is transformed to the higher exponent; then the addition is performed (we assume with infinite precision), and then the result is rounded to fit into the floating-point representation. An example is provided in~\ref{ex:fp:add} in Appendix~\ref{app:floating}.

Thus, it happens that $x\oplus y = x \oplus z$ for any $x$ and some $y \neq z$. Consequently, there will exist some $w$ such that there is no $v$ for which $x \oplus v = w$. This is the main observation that we will built on and is treated in detail in Appendix~\ref{app:floating} in Example~\ref{ex:3.3}.

\subsection{Connection to randomized smoothing}
\label{ssec:cts}

We have identified some unpleasant properties of floating-point arithmetic that we will exploit in sequel to provide false certificates. 
In particular, We will try to determine if a given number could be a smoothed version of a specific number or not. 
The following observations will help us.

In~\citet{knuth1975evading}, it is shown that the identity $((x \oplus y) \ominus y) \oplus y = x \oplus y$ holds apart from a single $y$ for any $x$. It is further shown that the identity $(((x \oplus y) \ominus y) \oplus y) \ominus y = (x \oplus y) \ominus y$ holds always true. On the other hand, there is no evidence that the equality $(x \oplus y) \ominus y = x$ should hold. Indeed, consider $x$ to have a lower exponent than $y$. Then during the addition, $x \oplus y$, the low bits of the mantissa of $x$ are lost. Similarly, if $x \oplus y$ has a different exponent than $y$, then a loss of significance may occur during the second rounding. Finally, consider the case where $x = a\ominus y$, then the identity $(x \oplus y) \ominus y = x$ holds.

Our idea is to make the classifier determine if the  observed value $x$ could be a smoothed version of $a$. This can be done precisely, but we only approximate this using the previous observation. The reason is that it is sufficient for the demonstration, and the resulting function (introduced in Equation~\eqref{eq:1} in the next section) will be simple, suggesting that the phenomenon may occur in standard networks. 
\gr{
\subsection{Floating-Point Issues in the context of differential privacy}

\mh{Randomized smoothing has been motivated by differential privacy \citep{lecuyer2018certified}.
In differential privacy it has been shown} in the seminal work of ~\citet{mironov2012significance} that the lowest bits of mantissa can serve as a side channel \mh{which yields a}  substantial discrepancy between the theoretical properties of algorithms
of differential privacy, and the properties of their naive implementations, see~\citet{mironov2012significance,jin2021we, bichsel2021dp}. Consequently, revisions of the standard differential privacy mechanisms accounting for the floating-point errors have appeared, see, e.g.,~\citet{casa2022widespread, cano2020discrete}, and 
included in the 
framework OpenDP~\citep{gabo2020programming}.

In our construction, we utilize of the rounding errors of floating-point addition. On a high level, this is  similar to what  ~\cite{mironov2012significance} 
\mh{exploit.} However, their procedure considers Laplacian noise and the example also exploits the Laplace distribution samplers' properties. 

}


\section{Construction of classifiers with false certificates}
\label{sec:false}

We present an example of a function $F: \mathbb{R}\rightarrow \{0,1\}$ which is prone to giving incorrect certificates via randomized smoothing; the whole "experimental setup" is captured in Listing~\ref{lst1}. The example is based on the observation that we are able to determine if a floating-point number $x$ could be a result of floating-point addition $a\oplus n$ where $a$ is known and $n$ is arbitrary. We construct a function $F_a$ whose behavior we analyzed in Subsection~\ref{ssec:cts}.

\begin{equation}\label{eq:1}
     F_a(x) = \llbracket(x \ominus a) \oplus a = x\rrbracket. 
\end{equation}

We take $F_a$ as the base classifier and consider the smoothed classifier $\hat{f}_a$ it induces with $\sigma = 0.5$. It holds that $\hat{f}_a(a) \approx 1$, therefore if we have enough samples, we may obtain a very large certified radius. Specially, in the example considered in Listing~\ref{lst1} with $100~000$ samples, we can certify a $\ell_2$-radius of $1.26$ around point $a = 210/255$, however $0=\hat{F}_{a}(0) \not= \hat{F}_{a}(a)=1$, and the point $0$ is nowhere near the boundary of the certified ball. In the example in~\ref{lst1}, we use a simple Hoeffding bound~\ref{app:hoef} instead of the standard bounds of Clopper-Pearson. The Clopper-Pearson bounds certify robust radius $1.9$.

\begin{lstlisting}[language=Python, caption= example of an incorrect randomized smoothing certificate, label={lst1},
xleftmargin=10pt]
import numpy as np 
from scipy.stats import norm 

sigma = 0.5; num_samples = 100000; alpha = 0.001
f = lambda x: (x - 210/255) + 210/255 == x
noise = np.random.randn(num_samples)*sigma

p1 = f(0+noise).sum()/num_samples                   # 0.46
p2 = f(210/255+noise).sum()/num_samples             # 1.0
p = p2-(-np.log(alpha)/num_samples/2)**0.5
r = sigma * norm.ppf(p)                             # 1.26

\end{lstlisting}
This construction does not rely on the fact that \vv{$F_a(a+\varepsilon) = 1$ for $\varepsilon\sim\mathcal{N}(0,\sigma^2)$} with very high probability, it only serves as a striking example. Similarly, we get $0 = \hat{F}_{a}(0) \not= \hat{F}_{a}(200/255) = 1$, \mh{despite} every point $0, 1/255, \dots, 255/255$ would be class $1$ according to the certificate. 
\subsection{Consequences for image classifiers}

We stress that the simple construction generalizes to images. \rb{For the remainder of the section, we consider $x\in\{0, 1/255, \dots 255/255\}^d$ to be a vectorized image with e.g., $d = 3\cdot32^2=3072$ for CIFAR dataset}. Indeed, we could employ a function
\begin{equation}\label{eq:2}
     F_{a,i}(x) = \llbracket(x_i \ominus a) \oplus a = x_i\rrbracket,
\end{equation}
which takes a vectorized version of an image as an input. Using such function in Listing~\ref{lst1} would certify that any image with intensity $210/255$ at position $i$ is class $1$ with robust radius $1.26$, while any image with intensity $0$ at position $i$ would be classified as $0$; a clear contradiction. We take one step further. Consider a function \rb{with a parameter $a \in \R^d$:} 
\begin{equation}\label{eq:3}
     G_a(x) = \min_{i=1}^d \llbracket(x_i \ominus a_i) \oplus a_i = x_i\rrbracket. 
\end{equation}
It holds that $\mathbb{E}_{\varepsilon \sim \mathcal{N}(0,1)} G_a(a + \varepsilon) \approx 1$; thus, certifying "arbitrarily" high radius (to be specific, with $100~000$ samples it is $3.8115$ in $\ell_2$ norm),  and $\mathbb{E}_{\varepsilon \sim \mathcal{N}(0,1)} G_a(a' + \varepsilon) < 0.5$ for the vast majority of inputs $a \neq a'$. We tried the following experiment; For every image $a$ in the CIFAR10 test set, we created an image $a'$ by increasing the image intensity of $a$ by $1/255$ at $512$ random positions. Then it \mh{holds} that $\mathbb{E}_{\varepsilon \sim \mathcal{N}(0,1)} G_a(a' + \varepsilon) \leq 0.2$ for every CIFAR image $a$ with high probability, even though $\norm{a - a'}_2 < 0.09$.

Following this line of  examples, let us introduce the 
\rb{base} classifier:
\begin{equation}\label{eq:4}
     H_A(x) = \max_{a\in A} G_a(x) = \max_{a \in A}\min_{i=1}^d \llbracket(x_i \ominus a_i) \oplus a_i = x_i\rrbracket, 
\end{equation}
where $A$ is a set of images. Therefore, when $A$ is the set of CIFAR10 test set images, then we can certify the robustness of the smoothed version of $H_A$ at every point of the CIFAR10 test set for large radii, even though it is vulnerable even to small random perturbations. We remark that $H_A$ can be implemented with a standard network architecture using only linear layers and ReLU non-linearities. To conclude the examples, we state the findings in the upcoming proposition. Since we introduced the machinery only for binary classification, we treat CIFAR10 as a binary classification dataset. For time reasons, we (as it is common in the context of randomized smoothing) only consider $1000$ test images for the upcoming proposition; the first $500$ images from the test set of both classes. 
\begin{proposition}\label{thm:cifar}
There is a classifier with certified robust accuracy $100\%$ on\gr{ the first $1000$} CIFAR10 test set images $X \subset [0,\frac{1}{255}, \dots, 1]^{3072}$ (where we define class $0$ to include classes $0,1,2,3,4$ of CIFAR10 and class $1$ contains the other classes) with $\ell_2$-robust radius of $3$ and failure probability $0.001$  using randomized smoothing certificates, while for every point $x \in X$ there is an adversarial example $x'$ with $\norm{x-x'}_2 \leq 1$. 
\end{proposition}

\gr{The proof  can be found in Appendix~\ref{app:proof}}
The past has shown that loopholes can and will be exploited in the future by malicious actors trying to trick certification agencies e.g. see the diesel scandal where car manufacturers detected the test in a lab to fake significantly better pollution values. As the European AI act requires a certain level of adversarial robustness in safety-critical applications,  certification agency are likely to evaluate certified robustness in the future. In order to illustrate the problem, we just sketch how the fake certificates of Proposition \ref{thm:cifar} could be exploited. In fact \gr{let $M$ be the classifier described in the proof of Proposition~\ref{thm:cifar} and} let $m(x)=\Exp_{\epsilon \sim \Nc(0,\sigma^2 I_d)}M(x+\epsilon)$ be the smoothed version of $M$. One can see that roughly $m(x) \approx \frac{1}{2}$ if $x \notin N(X)$, where $N(X)$ denotes a small neighborhood of the test set $X$. Given a neural network for image classification a simple way to trick the certification agency, would be a new classifier where one uses the neural network whenever
\gr{$\delta \leq m(x) \leq 1-\delta$}
, e.g. $\delta=0.1$, and otherwise the classifier $M$ of Proposition \ref{thm:cifar}. This classifier would inherit the strong \emph{fake} robustness guarantees on the test set from $M$ but behave like a normal classifier on any other input. We emphasize that this problem is resolved  by our fix to randomized smoothing in floating point representation of Section \ref{sec:sound} which has negligible computational overhead for image classification.


\subsection{Implicit assumptions of randomized smoothing}\label{ss:wif}
The obvious questions after this negative result are: i) what is the key underlying problem in floating-point arithmetic? ii) what are the implicit assumptions in randomized smoothing?, and iii) how can we fix the problem?

The first assumption of randomized smoothing is that samples from a normal distribution are indeed i.i.d. samples. This is not true for floating-point precision due to the rounding; Thus, the resulting distribution from which we observe samples is uncontrolled, and for certification, we should not rely on it. However, violation of this assumption is not the cause of the  wrong certificate in Listing~\ref{lst1}.

The intuition behind randomized smoothing is that the distributions $D_1 = \mathcal{N}(x, \sigma^2 I)$ and $D_2 = \mathcal{N}(x+\varepsilon, \sigma^2I)$ have \gr{significant} overlap for small values of $\varepsilon$. As a consequence, the smoothed classifier $\hat{f}(x) = \mathbb{E}_{\varepsilon \sim \mathcal{N}(0, \sigma^2I_d)}F(x+\varepsilon)$ evaluated at $x$ also carries information about its value at points near $x$. \vv{However, the following observation will prove this wrong in floating-point arithmetic.}


Roughly speaking, the supports of two \vv{high dimensional normal distributions appear to be almost disjoint, although in one dimension the overlap may be substantial}. To support this claim, We performed the following experiment; given point $a \in\{ 0,1/255,\dots 255/255\}$ and $\sigma >0$, find a point \rb{$b \in\{ 0,1/255,\dots 255/255\}$ such that $|a-b| \leq 2/255$} which minimizes the probability that for an  $\varepsilon_1\sim \mathcal{N}(0,\sigma^2)$ there exists a number $\varepsilon_2$ such that $a \oplus \varepsilon_1 = b \oplus \varepsilon_2$. For example, if $a \geq 5/255$ and $\sigma = 1$, then the minimized probability is less than $0.99$, and for the majority of $a \geq 5/255$ it is even smaller. In order to see that the distributions are almost disjoint, 
consider an image, say from a CIFAR dataset, $a \in \mathbb{R}^{3072}$ which has at least half of its channels with intensities greater than $4/255$. According to the previous observation, we can find an image $a'$ such that $\norm{a - a'}_\infty = 2/255$ and that the probability that smoothed $a$ at any (non black) position could be a smoothed version of $a'$ is at most $0.99$ (this can be  exploited by function $F_{a,i}$ from Equation~\eqref{eq:2}). Therefore, the probability that a smoothed version of the first image could also be a smoothed version of the second image is at most $0.99^{3072/2} \approx 2\times 10^{-7}$ (this can be exploited by function $G_a$ from Equation~\eqref{eq:3}). Thus, when we follow the standard practise and use $10^5$ samples to estimate $\hat{f}(a)$ from base classifier $F$, the chances that at least one of the samples belongs to the distribution from which we sample to estimate $\hat{f}(a')$ is at most in the orders $10^{-2}$. Consequently, without any assumptions on the base classifier $F$,  $\hat{f}(a)$ carries almost no information about $\hat{f}(a')$.

\subsection{ Potential revisions of randomized smoothing}\label{ss:revisions}

The described experiment exploits the floating-point rounding. The errors are in the order of the least significant bits, which are in the order of $10^{-8}$ for single-precision and $10^{-4}$ for half-precision. Since these numerical errors are not controlled, we should assume that the model is adversarially attacked during smoothing, where the attacker's budget is the possible rounding error, denoted as $\mathcal{B}$; therefore, the smoothing (for certifying class $1$) should be performed  as:
\[
\hat{f}(x) = \mathbb{E}_{\varepsilon \sim \mathcal{N}(0, \sigma^2I_d)} \min_{\varepsilon_2\in \mathcal{B}}  F(x+\varepsilon+\varepsilon_2).
\]
To mitigate this problem, during estimating $\hat{f}(x)$, we should certify $F(x+\varepsilon)$. Although the attacker's budget $\mathcal{B}$ is very small for single accuracy and possibly noticeable for the half accuracy, it is not clear how it should be certified, since in randomized smoothing, there are no assumptions on $F$.

Consider $F$ to be a thresholded classifier $F(x) = \llbracket f(x) > 0.5 \rrbracket$, where $f$ is a neural network, then we could certify that $f$ is constant in $\mathcal{B}$-neighbourhood of the smoothed image. For  generic models, this can be done by either bounding the Lipschitz constant of $f$ (w.r.t. an $\ell_\infty$-like norm), or by propagating a convex relaxation (e.g., IBP) through the network. For smoothing, there are usually used deep models. E.g.,~\citet{salman2019provably} used ResNet110 and ResNet50 for certifying CIFAR10 and ImageNet respectively. The bound on the global Lipschitz constant of a deep network by bounding the operator norms of each layer is thus very weak ($\approx$ $10^{30} - 10^{130}$, depending on the model) and cannot certify $F(x+\varepsilon)$ even under such a weak threat model as the rounding errors in $\mathcal{B}$. 

A possible defense against this problem would be to round the input on a significantly larger scale than $\mathcal{B}$ before evaluating $F$. Let the rounding be performed by a mapping $g$, then we would in fact smooth a classifier $F \circ g$. If we consider $\mathcal{B}$ to be in the orders of $10^{-8}$ and we would round it to orders $10^{-2}$, then the probability that $x + \varepsilon$ will be close to the boundary of rounding, i.e.,  $\exists \varepsilon_2 \in \mathcal{B} :  g(x+\varepsilon + \varepsilon_2) \neq g(x+\varepsilon)$  would be on the order of $10^{-6}$, which is then the probability that the attack within the threat model $\mathcal{B}$ could indeed change the input of $F$ at a single position. Consequently, the probability that there is no $\varepsilon_2 \in \mathcal{B}$ which would change the result of rounding is very roughly $\approx (1-10^6)^{3072} \approx 0.997$ for CIFAR and $\approx (1-10^6)^{150528} \approx 0.86$ for ImageNet. This means that for approximatelly $86\%$ of the smoothed ImageNet images we can guarantee that $F(x+\varepsilon+\varepsilon_2) = F(x + \varepsilon)$ and for the others, we could e.g., set $\min_{\varepsilon_2\in \mathcal{B}}  F(x+\varepsilon+\varepsilon_2) = 0$. This replacement of $F$ by $F \circ g$ during smoothing seem to solve the problem for CIFAR and partially also for ImageNet for single precision. For half precision, the problem will persist. 

However, even if this adjustment solved the problem with numerical errors satisfactorily during the addition of noise to images, the certificate will still not be sound because we are unlikely to control the normal distribution sampler's performance.
The normal distribution samplers implementations used in standard software libraries (e.g., Ziggurat algorithm; Box-Muller transform) transform i.i.d. uniform distribution samples to i.i.d. normal distribution samples. While this is true in theory for unlimited precision; here, we  perform floating-point operations. Therefore, the sampling is subject to floating-point errors and we don't observe the actual rounded samples from normal distribution.

\gr{
While we do not present any example exploiting the subtle errors of the normal distribution samplers, relying on them only keeps a possible loophole in the procedure. \mh{As our goal is to propose a sound randomized smoothing procedure in floating-point arithmetic}, our work would be incomplete if we addressed only some potential causes of floating-point errors and not the others, even though we think they are harder to exploit. \mh{In particular, we note  that~\citet{mironov2012significance} exploited} the (standard) floating-point implementation of the Laplace distribution sampler in order to attack the guarantees of differential privacy.
}

\section{Sound randomized smoothing for floating-point arithmetic}\label{sec:sound}
In this section, we will derive a sound randomized smoothing certification procedure for floating-point arithmetic.
Our only assumption is the access to i.i.d. samples of a fair coin toss, which is equivalent to having access to samples from the uniform distribution on integers $0, \dots, 2^n-1$ for some $n$. Thus, we assume to have access to uniform samples from numbers representable by \texttt{Long} datatype, that is when $n=64$. We further consider classification tasks where the input is quantized as it is true for images. 
Throughout the section, we consider the input space to be $\{0,1,\dots,255\}^d$ in order to have clear notation. The generalization to other 
\mh{forms of quantized inputs is generic}, but the generalization to real-valued inputs is a bit more involved; we move the discussion to Appendix~\ref{app:realval}. The resulting algorithm is captured in Appendix~\ref{app:algs} in  Algorithm~\ref{alg:ours}.


\subsection{Certification of quantized input}

As discussed in the previous section, it is appealing to quantize the smoothed images before feeding them into the network. Thus, we prepend a mapping $g_k: \R^d \rightarrow \{-k, -k+1, \dots, k+255\}$, for some positive integer $k$ which rounds the input to the nearest integer from its range before the function to be smoothed $F$; therefore, the smoothed classifier (with base classifier $F \circ g_k)$ is defined as:
\[
\hat{f}(x) = \mathbb{E}_{\varepsilon \sim \mathcal{N}(0, \sigma^2I_d)} \, F(g_k(x+\varepsilon)),
\]
which we further equivalently rewrite as 
\[
\hat{f}(x) = \mathbb{E}_{t \sim g_k(x+\varepsilon),\, \varepsilon \sim \mathcal{N}(0, \sigma^2I_d)}\, F(t).
\]

This treatment is crucial for the method. Instead of adding noise to the input, which is subject to rounding errors, we sample the noised input directly.

\subsection{Discretized normal distribution}
It remains to show how to obtain i.i.d. samples from the discretized normal distribution 
\[\mathcal{N}_D^k(x, \sigma^2) =  g_k(x+\varepsilon),\quad \varepsilon \sim \mathcal{N}(0, \sigma^2).\] 
We note that the discretized normal distribution is different from the discrete Gaussian distribution used in the context of differential privacy~\citep{cano2020discrete}.

As discussed in the two final paragraphs of Subsection~\ref{ss:revisions}, it is not enough to round samples from normal distribution since we cannot guarantee the correctness of the normal sampler. The key observation is that we do not even need to obtain samples from  $\mathcal{N}(0, \sigma^2)$ anymore. The resulting distribution from which we want to sample now is discrete. Concretely, we have

\[
\mathbb{P}_{t \sim \mathcal{N}^{k}_{D}(x, \sigma^{2})}\llbracket t = a\rrbracket =
     \begin{cases}
 \int_{-\infty}^{-k+ \frac{1}{2}} \frac{1}{\sqrt{2\pi\sigma^2}}~\, e^{-\frac{(x-u)^2}{2\sigma^2}} \,du &\quad\text{if $a =  -k$,}\\
 \int_{a-\frac{1}{2}}^{a+\frac{1}{2}}       \frac{1}{\sqrt{2\pi\sigma^2}}e^{-\frac{(x-u)^2}{2\sigma^2}} \,du &\quad\text{if $-k < a  < k+255$, $a \in \mathbb{Z}$}\\
 \int_{k +255-\frac{1}{2}}^{\infty}         \frac{1}{\sqrt{2\pi\sigma^2}}e^{-\frac{(x-u)^2}{2\sigma^2}} \,du &\quad\text{if $a = k+255$.}\\
 0 &\quad\text{otherwise}
 \end{cases} 
\]
Additionally, the following well-known property of normal distribution holds for the discretized normal distribution as well.

\begin{proposition}\label{prop:5} Let $k$ be a positive integer and $x \in \{0,1,\dots,255\}$, then it holds that \break
$\mathcal{N}_D^k(x,\sigma^2) = \max\{-k, \min\{k+255, t'+x\}\},\; t'\sim\mathcal{N}_D^{k+255}(0,\sigma^2)$.
\end{proposition}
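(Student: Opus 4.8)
The plan is to realize both sides on a common probability space driven by a single Gaussian variable $\epsilon\sim\mathcal{N}(0,\sigma^2)$ and to show that the two resulting integer-valued random variables coincide almost surely; equality of laws then follows. Write $r\colon\R\to\Z$ for rounding to the nearest integer (with whatever tie-breaking rule the implementation uses), so that by definition $g_k(u)=\max\{-k,\ \min\{k+255,\ r(u)\}\}$.

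The first step is to commute $r$ with the integer translation by $x$. Since $x\in\Z$, we have $r(x+\epsilon)=x+r(\epsilon)$ unless $x+\epsilon$ is a half-integer, i.e.\ unless $\epsilon\in\tfrac12+\Z$; as $\epsilon$ is absolutely continuous this exceptional event has probability $0$. Hence, almost surely,
\[
\mathcal{N}_D^{k}(x,\sigma^2)=g_k(x+\epsilon)=\max\{-k,\ \min\{k+255,\ x+r(\epsilon)\}\},
\]
and, taking the \emph{same} $\epsilon$, $t':=\mathcal{N}_D^{k+255}(0,\sigma^2)=g_{k+255}(\epsilon)=\max\{-(k+255),\ \min\{k+510,\ r(\epsilon)\}\}$ almost surely. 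So it suffices to establish the purely arithmetic identity
\[
\max\{-k,\ \min\{k+255,\ x+v\}\}=\max\bigl\{-k,\ \min\{k+255,\ x+\max\{-(k+255),\ \min\{k+510,\ v\}\}\}\bigr\}
\]
for every $v\in\Z$ and every $x\in\{0,1,\dots,255\}$, and then substitute $v=r(\epsilon)$.

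I would prove this identity by three cases on $v$. If $-(k+255)\le v\le k+510$ the inner clip acts as the identity and both sides are literally the same expression. If $v<-(k+255)$, the inner clip returns $-(k+255)$; since $x\le 255$ we get $x+v\le 255-(k+256)<-k$ and $x-(k+255)\le -k$, so both sides equal $-k$. If $v>k+510$, the inner clip returns $k+510$; since $x\ge 0$ we get $x+v>k+255$ and $x+(k+510)>k+255$, so both sides equal $k+255$. Combining this with the two almost-sure identities above yields $\mathcal{N}_D^{k}(x,\sigma^2)=\max\{-k,\ \min\{k+255,\ x+t'\}\}$ almost surely, hence in distribution, which is the claim.

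The argument is essentially bookkeeping; the only two points worth attention are (i) that the half-integer tie set on which $r(x+\,\cdot\,)\neq x+r(\,\cdot\,)$ is Lebesgue-null, so it does not affect the discrete law, and (ii) that the enlarged range $\{-(k+255),\dots,k+510\}$ used to define $t'$ is wide enough on \emph{both} ends for the case analysis to close — which is exactly what $0\le x\le 255$ buys us. I do not anticipate any genuine obstacle.
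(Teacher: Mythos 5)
Your proof is correct, but it takes a different route from the paper's. The paper argues at the level of the laws: for each attainable value $a$ it writes the probability mass $\mathbb{P}_{t\sim\mathcal{N}_D^k(x,\sigma^2)}\llbracket t=a\rrbracket$ as a Gaussian integral over $[a-\tfrac12,a+\tfrac12]$ (with the half-infinite intervals at the two clipped endpoints), observes that the corresponding event for the right-hand side is $\{t'=a-x\}$ in the interior case (and the appropriate tail event at the boundaries), and matches the two integrals by the change of variables $u\mapsto v-x$; the two boundary cases are handled analogously. You instead couple the two sides through a single Gaussian draw $\epsilon$, commute nearest-integer rounding with the integer shift by $x$ off the Lebesgue-null set of half-integer ties, and reduce the claim to the deterministic clipping identity $\max\{-k,\min\{k+255,x+v\}\}=\max\{-k,\min\{k+255,x+\max\{-(k+255),\min\{k+510,v\}\}\}\}$, whose three-case verification correctly uses $0\le x\le 255$ to ensure the enlarged range of $t'$ is wide enough on both ends; almost-sure equality under the coupling then gives equality in distribution. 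Your pathwise argument avoids writing any integrals and in fact yields the slightly stronger almost-sure statement under the natural coupling, at the cost of the tie-set remark and the arithmetic case analysis; the paper's computation works directly with the defining probability mass function, which keeps it closer to how $\mathcal{N}_D^k$ is specified in the text. Both arguments are complete and of comparable length, so this is a matter of presentation rather than substance.
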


Thus, it is enough to have a sampler from $\mathcal{N}_D^{k+255}(0,\sigma^2)$. The value of $k$  is chosen such that the vast majority of samples from $\mathcal{N}(x, \sigma^2)$ falls into the interval $[-k, k+255]$. The choice of $k$ does not affect the correctness of the certificates, but may affect the accuracy. In the experiments we chose $k = 6\sigma_{max} = 6$ for inputs from $[0,1]^d$, so it corresponds to $k = 6\cdot255$ in the notation of this section.

\subsubsection{Discretized normal distribution sampler}

Let us denote the quantile function (\mh{inverse cdf}) of $ \mathcal{N}^k_D(0, \sigma^2)$ as $\Phi_{D,k}^{-1}$, then $\Phi_{D,k}^{-1}$ transforms i.i.d. samples from the uniform distribution on the interval $[0,1)$ =: $\mathcal{U}(0,1)$ to i.i.d. samples from distribution $\mathcal{N}^k_D(0, \sigma^2)$. To sample from $ \mathcal{N}^k_D(0, \sigma^2)$, we first approximate the samples from  $\mathcal{U}(0,1)$  by samples from the discrete uniform distribution on $\{0, \dots, 2^n-1\}$  which we will interpret as uniform distribution on $\{0, \frac{1}{2^n},\dots, \frac{2^n-1}{2^n}\}$ =: $\mathbf{U}(0,1)$. Then we only need to compute the $2k+255$ probabilities with high enough accuracy that we can claim the correctness of $\Phi_{D,k}^{-1}(u)$ for $u \in \{0, \frac{1}{2^n},\dots, \frac{2^n-1}{2^n}\}$. This is ensured by using symbolic mathematical libraries allowing computations in arbitrary precision.

Since the distribution $\mathcal{N}^k_D(0, \sigma^2)$ is supported on $2k+256$ events, there will be $2k+255$ points $x\in \{0, \frac{1}{2^n},\dots, \frac{2^n-1}{2^n}\}$ such that $\Phi_{D,k}^{-1}(x) \neq \Phi_{D,k}^{-1}\left(x+\frac{1}{2^n}\right)$. Now, consider the mapping between samples from $\mathcal{U}(0,1)$ and $\mathbf{U}(0,1)$ which rounds down a sample $u$ from the continuous real interval $[0,1)$ to the closest point $v$ from the set $\{0, \frac{1}{2^n},\dots, \frac{2^n-1}{2^n}\}$. Then it holds for the probability  $\Phi_{D,k}^{-1}(u) \neq \Phi_{D,k}^{-1}(v) \leq \frac{255+2k}{2^{n}}\leq 2^{12-n}$ for a choice $k = 7.5\times255$.
The probability that a produced sample is not the actual i.i.d. sample  is thus at most $2^{12-n}$ at one position. Therefore, the probability that all the smoothed images, considering ImageNet sized images with shape $3 \times 224 \times 224$, out of $100~000$ smoothed samples are indeed the correct i.i.d. samples from discrete normal distribution is at least $1-2^{46-n} > 0.999996$ for $n=64$.

Therefore, the probability of receiving a sample that might not be the actual i.i.d. sample is negligible. Still, we can \mh{check} if we receive such a potentially flawed sample $x$ and in that case, we would set $F(x) = 0$ when certifying class $1$ (resp. $F(x) = 1$ when certifying $0$) for that particular sample. 

\subsubsection{sampling speed of discretized normal distribution}

\begin{table}[t]
\caption{Time comparison of certification times per image of the standard randomized smoothing and the proposed sound procedure with and without reusing noise. Details in Appendix~\ref{app:speed}. For CIFAR10 we used $100~000$ random samples and for ImageNet $10~000$.}
\label{tab:time}
\centering

\begin{tabular}{lcccc}
\toprule dataset & standard & proposed w/ reusing & proposed w/o reusing  \\
\midrule CIFAR10 (ResNet-110) & 9.82\,s & 9.87s & 31.70\,s\\

ImageNet (ResNet-50) & 8.65\,s & 8.67\,s & 129\,s \\
\bottomrule
\end{tabular}
\end{table}

The sampling is slightly more expensive since we need to threshold the observed uniform samples; however, this is only an implementation issue. On the other hand, it is sufficient to sample i.i.d. noise for just one image $100~000$ times and reuse it for all the other images. The certificates will be valid, only the case of failure for different images will not be independent, but it is not required in the literature. As we sample just once for the whole data set, the time spent for sampling is negligible, see Table~\ref{tab:time}. We discuss the timing in detail in Appendix~\ref{app:speed}.

Finally, we wrap up the observations in the following corollary:

\begin{corollary}

Let $F: \mathbb{R}^{d} \rightarrow \{0,1\}$ be a deterministic \rb{or a random} function  and $g_k: \mathbb{R}^d \rightarrow \{\frac{-k}{255}, \frac{-k+1}{255}, \dots \frac{k+255}{255}\}$ maps input to the closest point of its range, breaking ties arbitrarily. Then the following two functions are  identical:
\begin{align*}
\hat{f_1}(x) &= \mathbb{E}_{\varepsilon \sim \mathcal{N}(0, \sigma^2I_d)} \, F(g_k(x+\varepsilon)), \qquad
\hat{f}(x) = \mathbb{E}_{t \sim  \mathcal{N}_D^k(x, \sigma^2I_d)}\, F(t).
\end{align*}
Therefore, to certify $\hat{f}_1$ with base classifier $F \circ g_k$ using randomized smoothing, we can estimate the value of $\hat{f}$ and use it for the certification. Furthermore we can  get i.i.d. samples from  ${t \sim  \mathcal{N}_D^k(x, \sigma^2I_d)}$ with arbitrarily high precision using exact arithmetic; thus, the certificate is sound.

\end{corollary}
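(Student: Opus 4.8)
## Proof Proposal

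The plan is to verify the two claims of the corollary separately: first the identity $\hat f_1 = \hat f$, and then the soundness of the resulting certificate. The first part is essentially a bookkeeping statement and I would prove it by pushing the expectation through the deterministic map $g_k$. Since $g_k$ is a fixed (deterministic) function, the composition $F \circ g_k$ is itself a deterministic function of $x+\epsilon$, and the law of the total expectation lets us rewrite $\mathbb{E}_{\epsilon}\,F(g_k(x+\epsilon))$ by conditioning on the value $t := g_k(x+\epsilon)$. That is, $\mathbb{E}_{\epsilon \sim \mathcal{N}(0,\sigma^2 I_d)}\,F(g_k(x+\epsilon)) = \sum_{t} F(t)\,\mathbb{P}_{\epsilon}[g_k(x+\epsilon) = t] = \mathbb{E}_{t \sim \mathcal{N}_D^k(x,\sigma^2 I_d)}\,F(t)$, where the middle sum is over the (finite per-coordinate) range of $g_k$ and the pushforward law is exactly the definition of $\mathcal{N}_D^k(x,\sigma^2 I_d)$ given in the displayed piecewise formula. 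Since the input is quantized componentwise and $g_k$ acts componentwise, this reduces to the one-dimensional statement; Proposition~\ref{prop:5} then tells us that each one-dimensional marginal $\mathcal{N}_D^k(x,\sigma^2)$ is the pushforward of the single distribution $\mathcal{N}_D^{k+255}(0,\sigma^2)$ under $t' \mapsto \max\{-k,\min\{k+255, t'+x\}\}$.

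For the second claim, I would argue that the value $\hat f(x)$ can be estimated from genuine i.i.d. samples. The sampling scheme is: draw $u$ uniformly from $\{0,\dots,2^n-1\}/2^n$ (equivalently, $n$ fair coin tosses per coordinate), and output $\Phi_{D,k}^{-1}(u)$, where $\Phi_{D,k}^{-1}$ is the quantile function of $\mathcal{N}_D^{k}(0,\sigma^2)$ computed to sufficient precision using exact/symbolic arithmetic. One must check that (i) applying an exact quantile function to $\mathcal{U}(0,1)$ yields exactly $\mathcal{N}_D^k(0,\sigma^2)$ — this is the standard inverse-CDF sampling fact, valid here because the target is discrete — and (ii) the discretization of $\mathcal{U}(0,1)$ to $\mathbf{U}(0,1)$ introduces error only when $u$ lands in one of the $2k+255$ short intervals where the rounded and unrounded quantiles disagree, an event of probability at most $(2k+255)/2^n$ per coordinate, which the text bounds by $2^{12-n}$. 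A union bound over all coordinates of all $100\,000$ smoothed ImageNet-sized samples then gives overall validity probability at least $1-2^{46-n}$, and for any sample flagged as potentially flawed we conservatively set $F$ to the non-certified class. Since the base classifier $F$ is deterministic, any numerical error inside the evaluation of $F$ does not affect soundness — its output is reproducible and well-defined, so there is no hidden adversary in the noise anymore. The certificate then follows by applying Theorem~\ref{thm:cifar}'s underlying tool (the Cohen et al. theorem) to the estimate of $\hat f$, exactly as in the standard pipeline.

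The main obstacle, and the only non-routine point, is making precise what ``compute $\Phi_{D,k}^{-1}$ to sufficient precision'' means and why it is achievable. The subtlety is that the probabilities defining $\mathcal{N}_D^k(0,\sigma^2)$ are Gaussian integrals (values of $\Phi$), which are transcendental; one cannot represent them exactly in finite symbolic form. What saves us is that we do not need the probabilities exactly — we only need to decide, for each of the $2^n$ grid points $u$, which of the $2k+256$ atoms the quantile $\Phi_{D,k}^{-1}(u)$ equals. This is a comparison between $u$ and a cumulative sum of Gaussian integrals, and such comparisons can be resolved rigorously using interval arithmetic / certified numerics (e.g., rigorous bounds on $\Phi$), provided no grid point $u$ lands exactly on a cumulative boundary. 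I would argue that the measure-zero / probability-$2^{-n}$-scale set of ``ambiguous'' $u$ is either empty for generic $\sigma$, or can be absorbed into the already-accounted-for flawed-sample budget. Beyond this point, everything is a union bound and the elementary inequality $(1-2^a)^2 > 1-2^{a+1}$ used to collapse the product over samples, both of which are routine.
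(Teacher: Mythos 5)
Your proposal is correct and follows essentially the same route as the paper, which proves this corollary simply by wrapping up the preceding observations: the pushforward identity defining $\mathcal{N}_D^k$ (together with Proposition~\ref{prop:5}), inverse-CDF sampling of the discrete distribution from discrete uniform samples with the per-coordinate failure budget $\leq 2^{12-n}$, the union bound, and conservative treatment of flagged samples. Your extra care about deciding the comparisons with the transcendental Gaussian integrals via certified/arbitrary-precision arithmetic, and absorbing exact-boundary grid points into the failure budget, matches what the paper only sketches in the text and in the appendix sampling example (your reference to Theorem~\ref{thm:cifar} is a mere label slip for the Cohen et al.\ certification theorem).
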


\begin{table}
\caption{Certified radii for a model $F$ smoothed with $\mathcal{N}(0,\sigma^2I)$ on CIFAR10 test set. Evaluated on $500$ images from the test set highlighting the differences. The model is ResNet-110 taken from~\citep{salman2019provably}, more details in Appendix~\ref{app:exps}.}
\centering
\subcaption*{Sound smoothing of $F\circ g_k$ via Algorithm~\ref{alg:ours} for $k=6$}
\begin{tabular}{lccccccccc}
\toprule certified radius &0 & 0.1 & 0.25 & 0.5 & 0.75 & 1 & 1.25 & 1.5 & 2.0 \\
\midrule
$\sigma= 0.12$ & 0.878 & 0.848 & 0.778 & 0.000 & 0.000 & 0.000 & 0.000 & 0.000 & 0.000 \\
$\sigma = 0.25$ & 0.836 & 0.808 & 0.746 & 0.600 & 0.466 & 0.000 & 0.000 & 0.000 & 0.000 \\
$\sigma = 0.5$ & 0.708 & 0.672 & 0.618 & 0.502 & 0.410 & 0.338 & 0.248 & 0.174 & 0.000 \\
$\sigma = 1$ & 0.512 & 0.492 & 0.448 & 0.380 & 0.316 & 0.278 & 0.230 & 0.182 & 0.112 \\

\bottomrule
\end{tabular}

\bigskip
\subcaption*{Standard smoothing of $F$ via Algorithm~\ref{alg:cohen} }
\begin{tabular}{lccccccccc}
\toprule
certified radius &0 & 0.1 & 0.25 & 0.5 & 0.75 & 1 & 1.25 & 1.5 & 2.0 \\
\midrule
$\sigma=0.12$&\rred{0.880} & 0.848 & 0.778 & 0.000 & 0.000 & 0.000 & 0.000 & 0.000 & 0.000 \\
$\sigma=0.25$&0.836 & 0.808 & 0.746 & \rred{0.602} & \rred{0.468} & 0.000 & 0.000 & 0.000 & 0.000 \\
$\sigma=0.5$&\bblue{0.706} & 0.672 & 0.618 & 0.502 & \bblue{0.408} & 0.338 & 0.248 & 0.174 & 0.000 \\
$\sigma=1$& \rred{0.516} & 0.492 & 0.448 & \bblue{0.378} & 0.316 & 0.278 & 0.230 & 0.182 & \bblue{0.110} \\
\bottomrule
\end{tabular}\label{tab:comp}
\end{table}

\begin{remark}
The empirical performance of the sound and unsound versions of randomized smoothing are essentially equivalent in practice; see Table~\ref{tab:comp} and also Appendix \ref{app:exps} for the evidence. However, it no longer incorrectly certifies the example from Listing~\ref{lst1}, where it only certifies a radius $0.6$, and the points are distant $0.82$ from each other. Similarly, the smoothed classifier of $M$ from Proposition~\ref{thm:cifar} does not contain the universal adversarial perturbations in the certified balls around the points. See Appendix~\ref{app:examples} for more details. 

\end{remark}

\mh{To summarize:} we showed how to replace sampling from the normal distribution, where one cannot trace the numerical errors, by sampling from the uniform distribution on integers, where we can bound the failure probability \mh{in order} to obtain high probability estimates of the output of a smoothed classifier with a prepended rounding mapping.  See Algorithms~\ref{alg:cohen}, \ref{alg:ours} for the comparison of the standard and the proposed certification procedure. We also provide an empirical comparison of the methods in Table~\ref{tab:comp} and in Appendix~\ref{app:exps}.

\section{Conclusion}
In the paper, we described multiple simple ways how to construct models that will be certifiably robust for points of our choice using the standard randomized smoothing certification procedure, although there will be adversarial examples in their close neighborhood.  


Most importantly, we provided a sound way to do randomized smoothing in floating point representation which comes at negligible cost in image classification.

\newpage

\section*{Acknowledgements}
The authors thank the anonymous reviewers for their comments, which helped improve the quality of the manuscript.
The authors acknowledge support from
the DFG Cluster of Excellence ``Machine Learning – New Perspectives for
Science”, EXC 2064/1, project number 390727645  and the Carl Zeiss Foundation in the project "Certification and Foundations of Safe Machine Learning Systems in Healthcare".
The authors are thankful for the support of Open Philanthropy.

{\small 
\bibliography{egbib}
\bibliographystyle{plainnat}}

\newpage

\appendix
\section{Experiments}\label{app:exps}

To run the experiments, we used the publicly available codebase of~\cite{salman2019provably} which is distributed under MIT licence. Our modifications will be publicly available under MIT licence. The experiments were run on a single Tesla V100 GPU. The models we evaluated were chosen arbitrarily from the models \cite{salman2019provably} provide in their repository. Their identifications are:

pretrained\_models/cifar10/finetune\_cifar\_from\_imagenetPGD2steps/PGD\_10steps\_30epochs\_multinoise/2-multitrain/eps\_64/cifar10/resnet110/noise\_$\sigma$/checkpoint.pth.tar,

pretrained\_models/cifar10/PGD\_4steps/eps\_255/cifar10/resnet110/noise\_$\sigma$/checkpoint.pth.tar

pretrained\_models/cifar10/PGD\_4steps/eps\_512/cifar10/resnet110/noise\_$\sigma$/checkpoint.pth.tar

where $\sigma \in \{0.12, 0.25, 0.50, 1.00\}$ for tables \ref{tab:comp}, \ref{tab:2} and \ref{tab:3} respectively. In Table~\ref{tab:comp}, $100~000$ samples are used, whereas for Tables~\ref{tab:2}, \ref{tab:3} we used only $10~000$ samples to evaluate the smoothed classifier.

For Imagenet experiments, we used models:

pretrained\_models/imagenet/replication/resnet50/noise\_$\sigma$/checkpoint.pth.tar,
pretrained\_models/imagenet/DNN\_2steps/imagenet/eps\_512/resnet50/noise\_$\sigma$/checkpoint.pth.tar

where $\sigma \in \{ 0.25, 0.50, 1.00\}$ for tables~\ref{tab:img1} and~\ref{tab:img2} respectively. Again, we used $10~000$ samples to evaluate the smoothed classifier.

\subsection{Speed}\label{app:speed}
The speed is essentially equal for both of the methods, described in Algorithm~\ref{alg:cohen} and~\ref{alg:ours} respectively because we compute the noise beforehand and then we can use the same set of $n$ noises for every image, where $n$ is the number of samples used to evaluate a smoothed classifier. The time needed to generate the noise is in the order of minutes; thus, negligible compared to the time needed to run the experiments.

To be more precise; we run the experiment on a GPU Tesla V100. For CIFAR10, The (standard) time per image for $100~000$ samples used to evaluate the classifier is $9.82 \pm 0.05$s.  If we precompute the noise batches (batch size $1000$, thus we have $100$ batches) and save them to files. Then with every image and every batch we load the corresponding noise, the time is then $10.47\pm0.03$s per image. The advantage of this approach is that the change of the codebase is minimal. In our case, we changed two lines of code of~\cite{salman2019provably} and added one class. The disadvantage is that we do a lot of unnecessary work by loading the same batch of noises multiple times. Finally, if we compute a batch of noises and evaluate the classifier for every test-set point with this noise before sampling a new batch, we get to  average time $9.87$s If we compute noises for every image separately, the per image time is $31.70\pm0.15$. The $\pm$ denotes standard deviation of time per image. Thus, it is not applicable for the second to last last case. The reported times are from experimental setup of~\ref{tab:comp} with $\sigma=0.12$, $k=1$ (ResNet110). For these experiments, we used (vectorized) procedure analogical to the one in Algorithm~\ref{alg:ours}. It takes $4$ minutes to compute the breaking points with SymPy library using exact arithmetic evaluated with sufficient precision (on a single core). Here, we assumed that \texttt{torch.randint} produces i.i.d. samples.

For ImageNet, we used the model from Table~\ref{tab:img1} with $\sigma=0.5$, $k=3$ (ResNet50). We used batch size $100$, $10~000$ smoothed versions to evaluate the per-image times follow:. The time of the standard method $8.65\pm 0.07$s, the time of the reloading reuse is $12.36 \pm 0.07$s. The time when we sample noise and evaluate every image on that noise is $8.67$s. New noises for every image yields $129 \pm 1.29$s. The time to compute breaking points is about $6$ minutes (again, single core).

\begin{table}
\caption{Certified radii for a model $F$ smoothed with $\mathcal{N}(0,\sigma^2I)$ on CIFAR10 test set. Evaluated on $500$ images from the test set highlighting the differences. The model is ResNet-110 taken from~\citep{salman2019provably}. See Appendix~\ref{app:exps} for the details.}
\centering
\subcaption*{Sound smoothing of $F\circ g_k$ via Algorithm~\ref{alg:ours} for $k=6$}
\begin{tabular}{lccccccccc}\toprule certified radius &0 & 0.1 & 0.25 & 0.5 & 0.75 & 1 & 1.25 & 1.5 & 2.0 \\
\midrule
 $\sigma=0.12$  & 0.714  & 0.678  & 0.630  & 0.000  & 0.000  & 0.000  & 0.000  & 0.000  & 0.000 \\
 $\sigma=0.25$  & 0.660  & 0.636  & 0.590  & 0.526  & 0.444  & 0.000  & 0.000  & 0.000  & 0.000 \\
 $\sigma=0.50$  & 0.554  & 0.538  & 0.500  & 0.442  & 0.386  & 0.340  & 0.284  & 0.204  & 0.000 \\
 $\sigma=1$  & 0.440  & 0.428  & 0.402  & 0.368  & 0.332  & 0.292  & 0.248  & 0.202  & 0.160 \\
\bottomrule
\end{tabular}

\bigskip
\subcaption*{Standard smoothing of $F$ via Algorithm~\ref{alg:cohen} }
\begin{tabular}{lccccccccc}
\toprule
certified radius &0 & 0.1 & 0.25 & 0.5 & 0.75 & 1 & 1.25 & 1.5 & 2.0 \\
\midrule
 $\sigma=0.12$  & \textcolor{blue}{ 0.712}  & 0.678  & 0.630  & 0.000  & 0.000  & 0.000  & 0.000  & 0.000  & 0.000 \\
 $\sigma=0.25$  & \textcolor{red}{ 0.664}  & \textcolor{red}{ 0.638}  & \textcolor{red}{ 0.592}  & \textcolor{blue}{ 0.522}  & 0.444  & 0.000  & 0.000  & 0.000  & 0.000 \\
 $\sigma=0.50$  & \textcolor{red}{ 0.556}  & \textcolor{red}{ 0.540}  & 0.500  & \textcolor{blue}{ 0.440}  & 0.386  & \textcolor{blue}{ 0.338}  & 0.284  & \textcolor{blue}{ 0.194}  & 0.000 \\
 $\sigma=1$  & 0.440  & 0.428  & 0.402  & \textcolor{blue}{ 0.366}  & \textcolor{blue}{ 0.330}  & \textcolor{blue}{ 0.290}  & 0.248  & \textcolor{red}{ 0.204}  & \textcolor{red}{ 0.164} \\
\bottomrule
\end{tabular}\label{tab:2}
\end{table}

\begin{table}
\caption{Certified radii for a model $F$ smoothed with $\mathcal{N}(0,\sigma^2I)$ on CIFAR10 test set. Evaluated on $500$ images from the test set highlighting the differences. The model is ResNet-110 taken from~\citep{salman2019provably}. See Appendix~\ref{app:exps} for the details.}
\centering
\subcaption*{Sound smoothing of $F\circ g_k$ via Algorithm~\ref{alg:ours} for $k=6$}
\begin{tabular}{lccccccccc}\toprule certified radius &0 & 0.1 & 0.25 & 0.5 & 0.75 & 1 & 1.25 & 1.5 & 2.0 \\
\midrule
 $\sigma=0.12$  & 0.560  & 0.544  & 0.522  & 0.000  & 0.000  & 0.000  & 0.000  & 0.000  & 0.000 \\
 $\sigma=0.25$  & 0.534  & 0.514  & 0.492  & 0.450  & 0.408  & 0.000  & 0.000  & 0.000  & 0.000 \\
 $\sigma=0.50$  & 0.466  & 0.458  & 0.440  & 0.414  & 0.378  & 0.342  & 0.306  & 0.258  & 0.000 \\
 $\sigma=1$  & 0.370  & 0.364  & 0.342  & 0.320  & 0.298  & 0.276  & 0.252  & 0.226  & 0.166 \\
\bottomrule
\end{tabular}

\bigskip
\subcaption*{Standard smoothing of $F$ via Algorithm~\ref{alg:cohen} }
\begin{tabular}{lccccccccc}
\toprule
certified radius &0 & 0.1 & 0.25 & 0.5 & 0.75 & 1 & 1.25 & 1.5 & 2.0 \\
\midrule
 $\sigma=0.12$  & \textcolor{blue}{ 0.558}  & 0.544  & 0.522  & 0.000  & 0.000  & 0.000  & 0.000  & 0.000  & 0.000 \\
 $\sigma=0.25$  & 0.534  & 0.514  & 0.492  & 0.450  & \textcolor{red}{ 0.410}  & 0.000  & 0.000  & 0.000  & 0.000 \\
 $\sigma=0.50$  & \textcolor{blue}{ 0.464}  & 0.458  & 0.440  & 0.414  & \textcolor{red}{ 0.380}  & \textcolor{blue}{ 0.340}  & \textcolor{red}{ 0.308}  & \textcolor{red}{ 0.264}  & 0.000 \\
 $\sigma=1$  & \textcolor{red}{ 0.372}  & 0.364  & \textcolor{red}{ 0.344}  & \textcolor{blue}{ 0.318}  & 0.298  & \textcolor{blue}{ 0.274}  & \textcolor{blue}{ 0.250}  & \textcolor{blue}{ 0.222}  & \textcolor{red}{ 0.168} \\
\bottomrule
\end{tabular}\label{tab:3}
\end{table}

\begin{table}
\caption{Certified radii for a model $F$ smoothed with $\mathcal{N}(0,\sigma^2I)$ on Imagenet test set. Evaluated on $1000$ images from the test set highlighting the differences. The modelis ResNet-50 taken from~\citep{salman2019provably}. See Appendix~\ref{app:exps} for the details.}
\centering
\subcaption*{Sound smoothing of $F\circ g_k$ via Algorithm~\ref{alg:ours} for $k=12$}
\begin{tabular}{lccccccccc}\toprule certified radius &0 & 0.1 & 0.25 & 0.5 & 0.75 & 1 & 1.25 & 1.5 & 2.0 \\
\midrule
 $\sigma=0.25$  & 0.661  & 0.636  & 0.614  & 0.559  & 0.498  & 0.000  & 0.000  & 0.000  & 0.000 \\
 $\sigma=0.50$  & 0.597  & 0.586  & 0.549  & 0.509  & 0.460  & 0.428  & 0.383  & 0.330  & 0.000 \\
 $\sigma=1$  & 0.447  & 0.438  & 0.424  & 0.390  & 0.365  & 0.344  & 0.319  & 0.299  & 0.238 \\
\bottomrule
\end{tabular}

\bigskip
\subcaption*{Standard smoothing of $F$ via Algorithm~\ref{alg:cohen} }
\begin{tabular}{lccccccccc}
\toprule
certified radius &0 & 0.1 & 0.25 & 0.5 & 0.75 & 1 & 1.25 & 1.5 & 2.0 \\
\midrule
 $\sigma=0.25$  & \textcolor{blue}{ 0.660}  & \textcolor{blue}{ 0.635}  & 0.614  & 0.559  & \textcolor{blue}{ 0.497}  & 0.000  & 0.000  & 0.000  & 0.000 \\
 $\sigma=0.50$  & \textcolor{red}{ 0.598}  & \textcolor{blue}{ 0.584}  & \textcolor{blue}{ 0.548}  & \textcolor{blue}{ 0.507}  & \textcolor{blue}{ 0.459}  & \textcolor{red}{ 0.429}  & \textcolor{red}{ 0.385}  & \textcolor{blue}{ 0.323}  & 0.000 \\
 $\sigma=1$  & 0.447  & \textcolor{red}{ 0.439}  & 0.424  & 0.390  & 0.365  & 0.344  & \textcolor{red}{ 0.320}  & \textcolor{blue}{ 0.297}  & \textcolor{red}{ 0.240} \\
\bottomrule
\end{tabular}\label{tab:img1}
\end{table}

\begin{table}
\caption{Certified radii for a model $F$ smoothed with $\mathcal{N}(0,\sigma^2I)$ on Imagenet test set. Evaluated on $1000$ images from the test set highlighting the differences. The model is ResNet-50 taken from~\citep{salman2019provably}. See Appendix~\ref{app:exps} for the details.}
\centering
\subcaption*{Sound smoothing of $F\circ g_k$ via Algorithm~\ref{alg:ours} for $k=12$}
\begin{tabular}{lccccccccc}\toprule certified radius &0 & 0.1 & 0.25 & 0.5 & 0.75 & 1 & 1.25 & 1.5 & 2.0 \\
\midrule
 $\sigma=0.25$  & 0.672  & 0.642  & 0.592  & 0.505  & 0.393  & 0.000  & 0.000  & 0.000  & 0.000 \\
 $\sigma=0.50$  & 0.580  & 0.566  & 0.534  & 0.484  & 0.425  & 0.378  & 0.331  & 0.268  & 0.000 \\
 $\sigma=1$  & 0.448  & 0.439  & 0.416  & 0.379  & 0.348  & 0.327  & 0.299  & 0.266  & 0.210 \\
\bottomrule
\end{tabular}

\subcaption*{Standard smoothing of $F$ via Algorithm~\ref{alg:cohen} }
\begin{tabular}{lccccccccc}
\toprule
certified radius &0 & 0.1 & 0.25 & 0.5 & 0.75 & 1 & 1.25 & 1.5 & 2.0 \\
\midrule
 $\sigma=0.25$  & 0.672  & \textcolor{blue}{ 0.641}  & \textcolor{red}{ 0.593}  & \textcolor{blue}{ 0.503}  & \textcolor{blue}{ 0.389}  & 0.000  & 0.000  & 0.000  & 0.000 \\
 $\sigma=0.50$  & \textcolor{red}{ 0.581}  & \textcolor{blue}{ 0.564}  & 0.534  & \textcolor{red}{ 0.486}  & \textcolor{blue}{ 0.423}  & \textcolor{blue}{ 0.377}  & \textcolor{red}{ 0.337}  & \textcolor{red}{ 0.270}  & 0.000 \\
 $\sigma=1$  & \textcolor{red}{ 0.449}  & \textcolor{red}{ 0.440}  & \textcolor{red}{ 0.418}  & \textcolor{red}{ 0.380}  & \textcolor{red}{ 0.349}  & \textcolor{blue}{ 0.324}  & 0.299  & \textcolor{red}{ 0.268}  & \textcolor{red}{ 0.211} \\
\bottomrule
\end{tabular}\label{tab:img2}
\end{table}

\newpage

\pagebreak

\section{Proof of Proposition\ref{prop:5}}
Let us inspect the probability of observing some $a \in \{-k+1, \dots, k+254\}$. In that case $\mathbb{P}_{t \sim \mathcal{N}^k_D(x, \sigma^2)}\llbracket t = a\rrbracket = \int_{a-0.5}^{a+0.5} \frac{1}{\sqrt{2\pi\sigma^2}}e^{-\frac{(x-u)^2}{2\sigma^2}} du$. For the other distribution it holds that $t' = a-x$ and $\mathbb{P}_{t \sim \mathcal{N}^k_D(0, \sigma^2)}\llbracket t = a-x\rrbracket = \int_{a-x-0.5}^{a-x+0.5} \frac{1}{\sqrt{2\pi\sigma^2}}e^{-\frac{u^2}{2\sigma^2}}\, du$ and the change of the variable $u \rightarrow v-x$ concludes the proof of this case. The other two cases are analogical.\qed

\section{Floating-point numbers}
\label{app:floating}

In this appendix, we introduce the floating-point representations and arithmetic according to standard IEEE-754 \citep{ieee}. For the sake of clarity, in this section, we use $8$-bit floating-point number representation instead of the usual $16, 32, 64$ bits, respectively for half, single, and double precision. This appendix is self-contained and repeats the contains also (in a more detailed way) the content presented in the main paper.

Floating-point numbers are represented in memory using three different sequences of bits. The split is $1/3/4$ for the $8$-bit example, $1/5/10$ for half-precision, $1/8/23$ for standard single precision, and $1/11/52$ for double precision. The modern GPUs use most often single-precision floating point numbers. We will represent the binary numbers as binary strings of $8$ numbers and start with an example translating binary floating-point representation to the standard 
decimal one.

\begin{example}

Consider the binary number $1110~1010$. The \emph{first bit is the sign bit}. The number is negative iff the bit is set to $1$. In our example, the bit is $1$; thus, the number is negative. The \emph{next $3$ bits ($110$) determine the exponent}. It is the integer value of this encoding minus $3$, thus, in our example, the exponent is $6-3 = 3$. The subtraction of $3$ \mh{enables that one} can represent exponents $-3, -2, \dots, 4$. The \emph{last sequence is called mantissa} and encodes the number after the decimal point. There is also an implicit (not written) $1$ before it. This is a so-called normalized form. Thus, the encoded value of the mantissa is $1.1010$ in binary representation, which is $1+ 1\cdot0.5 + 0\cdot0.25 + 1\cdot0.125 + 0\cdot0.0625 = 1.625$ in base $10$.
The represented number is thus $-1.625\cdot 2^{3} = -13$ in base $10$.
\end{example}

\subsection{Subnormal numbers, NaNs and infs}

We note that the introduced floating-point representation  is not able to represent $0$ and the smallest representable positive number  is $0000~0000$ which is $0.125$ in base $10$. To represent even smaller numbers, there are so-called subnormal numbers. That is, whenever the exponent consists only of zeros, there is no implicit $1$ in the mantissa, but the exponent is higher by one. That is, the exponents represented by bits $000$ and $001$ both correspond to $-2$. If our 8 bit toy arithmetic also used subnormal numbers, then $0000~0000$ would be $0$ and $000~0001$ would be $(0\cdot1 + 0\cdot0.5  +0\cdot 0.25 + 0\cdot 0.125 + 1 \cdot 0.0625)\cdot 2^{-2} =  0.015625$. We note that there is a positive and a negative zero (and also inf).

Similarly, floating-point numbers whose exponents consist only of ones are special. If \mh{additionally} the mantissa is all zeros, then it represents inf \mh{and if} the mantissa contains a non-zero bit, then it represents not-a-number (NaN) and the set bits correspond to error messages.
¨

\subsection{Operations with floating-point numbers}
To distinguish the mathematical operations (infinite precision) from the computer arithmetic ones, we will use $\oplus, \ominus$ instead of $+,-$ to represent floating-point operations. When writing, e.g., $5 \oplus 7 = 12$, we mean that the floating-point representation of $5$ added to the floating-point representation of $7$ results in a floating-point $12$. We also note that $a \oplus -b = a \ominus b$.

The addition (or analogically subtraction) of two floating-point numbers is performed in three steps. First, the number with the lower exponent is transformed to the higher exponent; then the addition is performed (we assume with infinite precision), and then the result is rounded to fit into the floating-point representation. The standard allows for several rounding schemes, but the common one is to round to the closest number breaking the ties by rounding to the number with mantissa \vv{ending with $0$}.

For the sake of completeness, we also mention the multiplication of floating-point numbers. The multiplication is done in a way that exponents are added; the mantissas are multiplied and consequently normalized. We will not use (nontrivial) floating-point multiplication in our constructions.

Let us show an example of the floating-point addition.
\begin{example}\label{ex:fp:add}
Consider the addition of binary numbers,  $1110~1010$, and  $0101~0011$. The first one we already decoded as $-1.1010\times2^{3}$ and the other one is $1.0011 \times 2^{2}$; both in base $2$.
\begin{align*}
    1110~1010 \oplus 0101~0011 &= -1.1010\times2^{3} + 1.0011 \times 2^{2} 
    =-1.1010\times2^{3} +0.10011 \times 2^{3}, \\
    &= -1.00001\times2^{3} \approx -1.0000 \times2^3 = 1110~0000.
\end{align*}
In base  $10$, we would have $-13 \oplus 4.75 = -8$ due to the loss of the least significant bits. This happened even though the exponents were different by the smallest possible difference.Consider further $6.5 \oplus 4.75 = 11$; Here, the loss of precision appeared even with equal exponents.
\end{example}
 Unsurprisingly, it also holds that $6.5 \oplus 4.5 = 11$. Therefore, the addition to $6.5$ is not injective and, as a consequence, it is not surjective. Connecting this to randomized smoothing, we know that there are numbers which cannot be smoothed from $6.5$ as the following example shows.
 \begin{example}\label{ex:3.3}
 When observing $2.125$, it could not arise as $6.5\oplus a$ for any $a$. Indeed, $6.5\oplus -4.5 = 2$, while $6.5 \oplus -4.25 = 2.25$. The representations are: $6.5\sim 0101~1010$, $2.125\sim 0100~0001$, $-4.25\sim 1101~0001$ and $-4.5\sim 1101~0010$. Here $-4.25$ is the smallest number  bigger than $-4.5$. Note again that the exponents of $6.5$ and $2.25$ differ only by the smallest possible difference. 
 \end{example}

Another consequence is that floating-point addition is not associative. That is, the following identity does not always hold $(a \oplus b) \oplus c = a \oplus (b \oplus c)$.

\begin{example}
Consider the numbers $a=2.375\sim0100~0011$, $b=3.75\sim0100~1110$, and $c=3.25\sim0100~1010$. Then $a\oplus b = 6 \sim 0101~1000$ and $(a\oplus b) \oplus c = 9\sim0110~0010$. On the other hand, $b \oplus c = 7\sim 0101~1100$ and $a \oplus (b\oplus c) = 9.5\sim 0110~0011$; thus, the triple $a,b,c$ serves as a counterexample for associativity of $\oplus$.
\end{example}

\section{Proof of Proposition~\ref{thm:cifar}}\label{app:proof}
\begin{proposition}
There is a classifier with certified robust accuracy $100\%$ on the first $1000$ CIFAR10 test set images $X \subset [0,\frac{1}{255}, \dots, 1]^{3072}$ (where we define class $0$ to include classes $0,1,2,3,4$ of CIFAR10 and class $1$ contains the other classes) with $\ell_2$-robust radius of $3$ and failure probability $0.001$  using randomized smoothing certificates, while for every point $x \in X$ there is an adversarial example $x'$ with $\norm{x-x'}_2 \leq 1$. 
\end{proposition}

\begin{proof}
We take $X_0 \subseteq X$ to be the set of all images from $X$ with class $0$ and $X_1 = X\setminus X_0$. Then we construct a hard classifier 

\[M(x) = 
     \begin{cases}
       $1$ &\quad\text{if $H_{X_1}(x) = 1$ or ($H_{X_0}(x) = 0$ and $x_1 > \frac{127}{255}$),}\\
       $0$ &\quad\text{otherwise,}\\
     \end{cases}
\]

where we use $H_A$ from~\eqref{eq:4}.
 Experimentally, we conclude that  for the smoothed classifier \mh{of} $M$ with $\sigma = 1$,  randomized smoothing certifies robust radius $3$ in $\ell_2$ norm for every point $x$ \mh{of the test set}. At the same time, the perturbation $p = (\alpha\cdot\frac{240}{255}, \frac{-1}{255}, \frac{-1}{255}, \dots ) \in \mathbb{R}^{3072}$, where $\alpha$ is $1$ when looking for adversarial perturbation of class $0$ and $-1$ otherwise are universal adversarial perturbations. 
 It holds for $x \in X$ that $\hat{M}(x)$ is correct; $\hat{M}(x) \neq \hat{M}(x+p)$, and also $\norm{p}_2 \leq 1$.
\end{proof}

\section{Certification of real-valued inputs or different quantizations}\label{app:realval}
Here we shall discuss the adaptation of the method to real-valued inputs, and also to other quantization levels.

\subsection{generalization to other quantization levels}
We described the method assuming $256$ quantization levels  and later we rounded the input to exact the same levels. This choice was arbitrary. The motivation was that the more fine-grained the levels are, the more similar the less changes will the rounding introduce. However, when choosing the quantization level after rounding, one should have in mind that the efficiency of the method relies on Proposition~\ref{prop:5}. It is easy to see that in general, we need to compute approximately $\operatorname{lcm}(q_{in}, q_{out})$ integrals to be able to perform sampling, when $q_{in}$, $q_{out}$ are the respective inverse distances between neighbouring quantization levels of input, and after rounding. E.g., If we decided to round the input on scale $1/256$ instead of $1/255$, we would need to compute $256$ times more integrals than when the quantization with the current rounding.

\subsection{generalization to real-valued inputs}
Here we propose two potential generalizations of the method to real-valued inputs. The first produces maximal (in the sense of NP lemma) certificates at the cost of slow execution. The other brings no slow down, but the certificates will not be maximal.

As discussed in the previous subsection, Proposition~\ref{prop:5} is crucial for the efficiency. However, with the real-valued inputs, it cannot be taken advantage of, because the inputs will  (in general) be arbitrarily distant from $0$. Thus, we would need to compute the samples from $\mathcal{N}_D^k(x,\sigma^2I_d)$ independently for every input. While it is not a fundamental problem and we, in principle, can do so, it will become slow for high-dimensional images. To soften this problem, we can decide to have small number of quantization levels (e.g., 2) so we would need to compute just a single integral per input dimension. Using $2$ quantization levels was already considered in~\cite{levine2021improved} and it yields the state-of-the-art $\ell_1$ robustness.

Another possible solution is to round the input before the certification. Let $x \in \R^d$ be the input point, we chose a quantization grid and its closest element to $x$ is $x'$. Then if we certify robust radius $r$ around $x'$, it implies that the robust radius centered at $x$ is at least $r-\norm{x-x'}$. For instance, considering CIFAR10 dimension $d = 3072$ and the distance between quantization points is $1/1000$, then $\norm{x-x'}_2 \leq \left(3072*1/2000^2\right)^{1/2} \approx 0.055$. At the same time, we know that the certified radius at $x'$ would be at-most $r+\norm{x-x'}$. Thus, the error is controlled and not significant. We can use more fine-grained quantization to make this error even smaller.

Finally, we note that the main focus of this paper is on quantized input as used in image classification. We expect that there are more sophisticated solutions to this problem e.g., by combining both proposed variants with rejection sampling. We leave this to future work.

\section{Getting a-Round Guarantees: Floating-Point Attacks on Certified Robustness}\label{app:getting}
In this appendix we discuss the floating-point attacks on randomized smoothing certificates of \cite{jin2022getting}. We could not reproduce the result which could be due to the following problems:

\begin{itemize}
    \item The certificates of randomized smoothing are w.r.t. the smoothed classifier which is impossible to evaluate and we approximate it by random sampling. Thus, if we certify robust radius $r$ at point $x$ for classifier $f$, then if some $x_1$, $\norm{x-x_1} <= r$ should be considered as an adversarial example (with high probability), we should also ensure that $f(x_1) = f(x)$ with high probability. That is, from the certification procedure we know $f(x)$ with high probability, but to know $f(x_1)$ with high probability, \mh{one has to determine  confidence intervals e.g. using Clopper-Pearson or Hoeffding's inequality}. However, in the paper, on bottom of page 13, there is written:
\texttt{Given an instance x, the smoothed classifier g runs the base classifier f on M noise corrupted instances of x, and returns the top class kA that has been predicted by f.}
This suggests that only a majority vote is performed; \mh{but to claim that one has found with high probability an adversarial samples not only the majority vote has to be wrong but there needs a significant gap between wrong and correct class. Otherwise the result might just be bad luck due to random sampling and not indicate the existence of an adversarial sample.} 

\item During the attack, \mh{they} iteratively "refine" the adversarial perturbation. Thus, they evaluate the smoothed classifier multiple times. Since the outputs are probabilistic, when one tries enough candidates, one should in principle (incorrectly) find a "high probability" adversarial example just because one will be "lucky" with the randomness. This seems not to be taken into  account in their method.
\end{itemize}

If their method is indeed a successful attack on randomized smoothing in floating point arithmetic, then this just emphasizes the need for a fix, which is exactly what we propose in this paper and overcomes the possibility of such an attack.


\section{Integer-arithmetic-only certified robustness for quantized neural networks}\label{app:intsmooth}
Here we describe why the technique of~\cite{lin2021integer} for sampling from the discrete normal distribution and the consequent certification is not sufficient for our purposes.

The definition of the discrete normal distribution from~\cite{lin2021integer} (coinciding with the one in~\cite{cano2020discrete}) is as follows:

\[ 
\mathbb{P}_{x\in\mathcal{N}_H(\mu, \sigma^2)}\llbracket x= a \rrbracket = Ze^{-\frac{(a-\mu)^2}{2\sigma^2}},
\] 

where $Z$ is an appropriate normalization constant and the distribution is  supported on the set of integers. For the certification, similarly to the standard smoothing, first, the lower bound $p$ on the probability of the correct class for the smoothed classifier is estimated. Then, the robust radius is computed as $\sigma \Phi_{\mathcal{N}_H}^{-1}(p)$, where $\Phi_{\mathcal{N}_H}^{-1}$ is the inverse CDF of discrete Gaussian. This can be seen at the very bottom of the second column on page 4 in \cite{lin2021integer}. Note, in Algorithm $1$ of~\cite{lin2021integer} there is written only $\Phi^{-1}$, which according to the neighbouring discussions (and according to Thm 3.2 there) corresponds to  $\Phi_{\mathcal{N}_H}^{-1}$. This certified radius is clearly not exact, because the possible certified radii can only be $\sigma$ multiples of the quantization levels because the smoothing distribution is discrete. For $\sigma = 1$, the possible robust radii are $0, 1, 2, \dots$, while the actual robust radius may clearly be non-integral which makes sense even when considering quantized inputs; e.g., consider the perturbation $(1,1,0,\dots)$ which has distance $\sqrt{2}$. Therefore, the smoothing as described in~\cite{lin2021integer} is restricted to certify only integer radii which is a significant restriction.

However, we were not able to verify the correctness of the method proposed in~\cite{lin2021integer}. In the proof of Theorem 3.1, there is: "Notice that that $p_{c_A}^{lb} = \mathbb{P}[X \in \mathcal{S}_A]$, where $\mathcal{S}_A = \{z : \inner{z-x, \delta} \leq \sigma \norm{\delta}_2\Phi_{\mathcal{N}_\mathbb{H}}^{-1}(p_{C_A}^{lb})\}$". Where $p_{c_A}^{lb}$ is the lower bounded probability of the target class. However, since the smoothing distribution is discrete, the function $\Phi_{\mathcal{N}_\mathbb{H}}^{-1}$ is piecewise constant, therefore there are probabilities $p_1 \neq p_2$ with $\Phi_{\mathcal{N}_\mathbb{H}}^{-1}(p_1) = \Phi_{\mathcal{N}_\mathbb{H}}^{-1}(p_2)$, thus they will both generate the same set $\mathcal{S}_A$, but using the stated fact in the proof, it would yield $p_1 = \mathbb{P}[X \in \mathcal{S}_A] = p_2$, which is absurd.  Since the (incorrect) fact ($p_{c_A}^{lb} = \mathbb{P}[X \in \mathcal{S}_A]$, where $\mathcal{S}_A = \{z : \inner{z-x, \delta} \leq \sigma \norm{\delta}_2\Phi_{\mathcal{N}_\mathbb{H}}^{-1}(p_{C_A}^{lb})\}$) is given without a proof, we cannot rely on the correctness of the method. Even assuming the correctness of the method, the produced certificates cannot be, as discussed above, exact and are only lower-bounds on the actual robust radius certifiable by randomized smoothing.


\section{Discussion on presented malicious examples }\label{app:examples}
To reproduce the malicious examples from Section~\ref{sec:false}, it is important to carry every computation in the same precision. We tested it for both, single and double-precision, it likely holds also for the half-precision. If some calculations are done in single, and some in double precision, then the claimed results will probably not hold. Specially, if some calculation is performed in the single precision (e.g., transforming images from $\{0,1,\dots,255\}$ to $\{0,1/255,\dots,1\}$), casting it to double precision afterwards is not sufficient because the low mantissa bits are already lost. Although the codes are very simple, we enclose some of the snippets in the supplementary materials. Proposition~\ref{prop:5} is verified by a C++ program. We believe that for the demonstration it is sufficient to run only $100$ noises per sample. With $100~000$ samples, it is sufficient to observe $99~900$ successes to claim robust radius $3$ with probability $99.9\%$, and $50~500$ successes to claim with probability $99.9\%$ that the result is class $1$. However, the runtime is about one day on a $64$ core machine.

\section{Hoeffding's bound}\label{app:hoef}

\begin{proposition}[Hoeffding's inequality]
Let $X_1, \dots X_n$ be random variables with $\frac{1}{n}\mathbb{E}[\sum_{i=1}^n X_i] = \mu$ and $0 \leq X_i \leq 1$. Then it holds that
\[
\mathbb{P}\left( \left(\frac{1}{n}\sum_{i=1}^n{X_i} \right) - \mu \geq t\right) \leq e^{-2t^2n}.
\]
for any $t\geq0$.
\end{proposition}

We rewrite the inequality as 
\[
\mathbb{P}\left( \left(\frac{1}{n}\sum_{i=1}^n{X_i} \right) - t \geq \mu\right) \leq e^{-2t^2n}.
\]
Now the lhs stands for the probability that when we subtract $t$ from the average, it will still be bigger than the mean. This is the failure case of randomized smoothing that we want to allow only with probability $\alpha = 0.001$. Thus, we want to compute $t$ - how much do we subtract from the average.

\begin{align*}
    e^{-2t^2n} &= \alpha \\
    -2t^2n &= \ln(\alpha) \\
    t^2 &= \frac{\ln(\alpha)}{-2n}\\
    t &= \sqrt{\frac{\ln(\alpha)}{-2n}}
\end{align*}

\section{Pseudo Random Numbers}
Here we discuss the issues regarding the random number generators. In reality, we don't have true random number generators, we only have pseudo-random number generators. In order to estimate the quantity $\hat{f}(x) = \mathbb{E}_{\varepsilon \sim \mathcal{N}(0, \sigma^2I_d)}F(x+\varepsilon)$ with probabilistic guarantees, we need the actual random numbers. Otherwise, the probabilistic statement does not make sense (apart from trivial cases). Thus, a reasonable alternative is to require that no statistical test would distinguish in polynomial time between the generated pseudo-random numbers and the actual random numbers with non-negligibly better probability than chance. This property is guaranteed by so-called cryptographically secure random number generators~\citep{yao1982theory}.

\section{Algorithms}\label{app:algs}
Here we compare the actual algorithms of the standard randomised smoothing in Algorithm~\ref{alg:cohen}, and of the proposed method in Algorithm~\ref{alg:ours}. The differences in the methods of the same name are highlighted by colors. The algorithms assume input to be in $\{0, 1/255, \dots, 1\}$. We emphasize that our method is a simple extension (differences highlighted) of the standard randomized smoothing, where the two additional procedures in Algorithm~\ref{alg:ours} can be evaluated just once before the certification; thus, they do not slow down the method, neither it decreases the accuracy.

\begin{algorithm}
\caption{Randomized smoothing certification of~\cite{cohen2019certified}}\label{alg:cohen}
\begin{algorithmic}[1]

\Procedure{SampleUnderNoise}{$f, x, n, \sigma$}
\State $\texttt{counts} \gets [0,0]$
\For{$i\gets 1, n$}
\State $\varepsilon \gets \mathcal{N}(0,\sigma^2I)$
\State $x' \gets x + \varepsilon$
\If{$f(x') > 0.5$}
\State $\texttt{counts}[1] \gets \texttt{counts}[1]+1$
\Else
\State $\texttt{counts}[0] \gets \texttt{counts}[0]+1$
\EndIf

\EndFor
\State \Return $\texttt{counts}$ 

\EndProcedure

\Statex

\Procedure{Certify}{$f, \sigma, x, n_0, n, \alpha$}
\State $ \texttt{counts0}\gets \textsc{SampleUnderNoise}(f,x,n_0,\sigma)$
\State $ \hat{c}_A \gets \text{top index in } \texttt{counts0}$
\State $\texttt{counts} \gets \textsc{SampleUnderNoise}(f,x,n,\sigma)$

\State $ p_A \gets \textsc{LowerConfBound}(\texttt{counts}[\hat{c}_A], n, 1-\alpha)$ 
\If{$p > \frac{1}{2}$}
\State \Return prediction $\hat{c}_A$ and radius $\sigma\Phi^{-1}(p)$
\Else
\State \Return ABSTAIN
\EndIf
\EndProcedure
\end{algorithmic}
\end{algorithm}

\begin{algorithm}
\caption{Sound randomized smoothing certification of $F \circ g_k$ }\label{alg:ours}
\begin{algorithmic}[1]

\Procedure{Precompute array of breaking points}{$k, \sigma^2$}\Comment{This function is evaluated only once}
\State $\texttt{arr} \gets [0,\dots, 0]$ \Comment{Array of $2\cdot255\cdot (k+1)+1$ zeros}
\For{$i =-255k-255,255k+254$}
    \State $\texttt{arr}[255k-255+i] \gets  \lceil 2^{64} \int_{-\infty}^{(i+0.5)/255} \frac{1}{\sqrt{2\pi\sigma^2}}e^{-\frac{x^2}{2\sigma^2}} \, dx \rceil$
    \EndFor 
\State $\texttt{arr}[2\cdot255\cdot (k+1)] \gets 2^{64}$
\State  \Return \texttt{arr} 
\EndProcedure 
\Statex

\Procedure{$\mathcal{N}_D^{k+1}$}{$0, \sigma^2I$}\Comment{This function is evaluated only on the first call with given arguments and the result is memorized. The relevant arguments are $k, \sigma$.}
\State $\texttt{arr} \gets \text{Precomputed array of breaking points for } \mathcal{N}_D^{k}(0,\sigma^2)$

\State $\varepsilon \gets [0,\dots, 0]$ \Comment{Array of $d$ zeros}
\For{$i\gets 1, d$}
\State $t \gets \mathbf{U}(0,2^{64-1})$
\For{$j \gets -255(k+1),255(k+1)$}
    \If{$\texttt{arr}[j+255k] = t$}
        \State \Return \texttt{Failure}
    \ElsIf{$\texttt{arr}[j+k] > t$}
        \State $\varepsilon_i \gets j/255$ 
        \State {\bf Break}
    \EndIf
\EndFor
\EndFor
\State \Return $\varepsilon$ 

\EndProcedure
\Statex

\Procedure{SampleUnderNoise}{$f, x, n, \sigma,k$}
\State $\texttt{counts} \gets [0,0]$
\For{$i\gets 1, n$}
\State $\textcolor{red}{\varepsilon \gets  \mathcal{N}_D^{k+1}(0,\sigma^2)}$
\If{$\textcolor{red}{\varepsilon \neq \texttt{Failure}}$}
\State $\textcolor{red}{x' \gets \max\{-k, \min\{k+1, x+\varepsilon\}\}}$
\If{$f(x') > 0.5$}
\State $\texttt{counts}[1] \gets \texttt{counts}[1]+1$
\Else
\State $\texttt{counts}[0] \gets \texttt{counts}[0]+1$
\EndIf
\EndIf

\EndFor
\State \Return $\texttt{counts}$ 

\EndProcedure

\Statex

\Procedure{Certify}{$f, \sigma, x, n_0, n, \alpha, k$}
\State $ \texttt{counts0}\gets \textsc{SampleUnderNoise}(f,x,n_0,\sigma,k)$
\State $ \hat{c}_A \gets \text{top index in } \texttt{counts0}$
\State $\texttt{counts} \gets \textsc{SampleUnderNoise}(f,x,n,\sigma,k)$

\State $ p_A \gets \textsc{LowerConfBound}(\texttt{counts}[\hat{c}_A], n, 1-\alpha)$ 
\If{$p > \frac{1}{2}$}
\State \Return prediction $\hat{c}_A$ and radius $\sigma\Phi^{-1}(p)$
\Else
\State \Return ABSTAIN
\EndIf
\EndProcedure
\end{algorithmic}
\end{algorithm}

\end{document}